\newtheorem{theorem}{Theorem}[section]
\newtheorem{lemma}[theorem]{Lemma}
\newtheorem{principle}{Principle}
\theoremstyle{definition}
\begin{document}
\title{A Practical Approach to Sizing Neural Networks}

\author{Gerald Friedland\footnote{UC Berkeley and Lawrence Livermore National Lab}, Alfredo Metere\footnote{International Computer Science Institute, Berkeley.}, Mario Michael Krell\footnote{International Computer Science Institute, Berkeley.}\\
friedland1@llnl.gov,  metal@berkeley.edu, krell@icsi.berkeley.edu
} 

\date{October 4th, 2018}

\maketitle

\begin{abstract}
Memorization is worst-case generalization. Based on MacKay's information theoretic model of supervised machine learning~\cite{mackay2003}, this article discusses how to practically estimate the maximum size of a neural network given a training data set. First, we present four easily applicable rules to analytically determine the capacity of neural network architectures. This allows the comparison of the efficiency of different network architectures independently of a task. Second, we introduce and experimentally validate a heuristic method to estimate the neural network capacity requirement for a given dataset and labeling. This allows an estimate of the required size of a neural network for a given problem. We conclude the article with a discussion on the consequences of sizing the network wrongly, which includes both increased computation effort for training as well as reduced generalization capability. 
\end{abstract}

\section{Introduction}
Most approaches to machine learning experiments currently involve tedious hyperparameter tuning. As the use of machine learning methods becomes increasingly important in industrial and engineering applications, there is a growing demand for engineering laws similar to the ones existing for electronic circuit design. Today, circuits can be drawn on a piece of paper and their behavior can be predicted exclusively based of engineering laws. Fully predicting the behavior of machine learning, as opposed to relying on trial and error, requires insights into the training and testing data, the available hypothesis space of a chosen algorithm, the convergence and other properties of the optimization algorithm, and the effect of generalization and loss terms in the optimization problem formulation. As a result, we may never reach circuit-level predictability. One of the core questions that machine learning theory focuses on is the complexity of the hypothesis space and what functions can be modeled, especially in connection with real-world data. Practically speaking, the memory and computation requirements for a given learning tasks are very hard to budget. This is especially a problem for very large scale experiments, such as on multimedia or molecular dynamics data. 

Even though artificial neural networks have been popular for decades, the understanding of the processes underlying them is usually based solely on anecdotal evidence in a particular application domain or task (see for example~\cite{morgan2012}). This article presents general methods to both measure and also analytically predict the experimental design for neural networks based on the underlying assumption that memorization is worst-case generalization.

We present 4 engineering rules to determine the maximum capacity of contemporary neural networks: 
\begin{enumerate}
\item The output of a single perceptron yields maximally one bit of information.
\item The capacity of a single perceptron is the number of its parameters (weights and bias) in bits.
\item The total capacity $C_{tot}$ of $M$ perceptrons in parallel is $C_{\text{tot}} = \sum_{i=1}^M C_i$ where $C_i$ is the capacity of each neuron.
\item For perceptrons in series (e.g., in subsequent layers), the capacity of a subsequent layer cannot be larger than the output of the previous layer. 
\end{enumerate}

After presenting related work in Section~\ref{sec:prior} and summarizing MacKay's proof in Section~\ref{sec:mackay}, we derive the above principles in Sections~\ref{sec:infomodel} and \ref{sec:combining}. In Section~\ref{sec:heuristic}, we then present and evaluate a heuristic approach for fast estimation of the required neural network capacity, in bits, for a given training data set. The heuristic method assumes a network with static, identical weights. Even if such a network will still be able to approximately learn any labeling for a data set, it would require too many neurons. We then assume that training is able to cut down the number of parameters exponentially, when compared to the untrained network. Section~\ref{sec:generalization} discusses the practical implications of memory capacity for generalization. Finally, in Section~\ref{sec:conclusion}, we conclude the article with future work directions.  

\begin{figure}[tb!]
\centering
\includegraphics[width=0.48\textwidth]{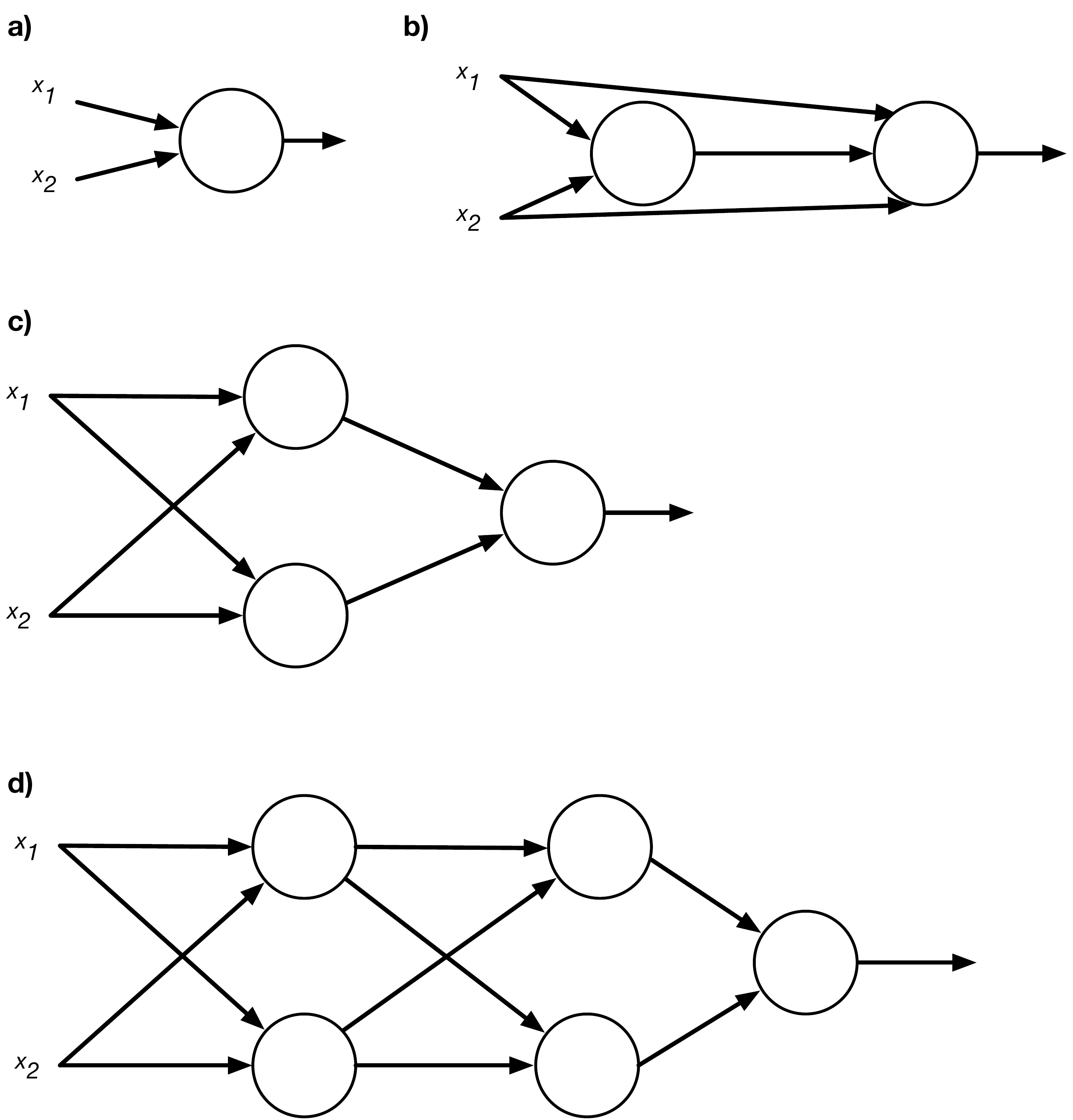}
\caption{\label{f:examples}The perceptron a) has $3$ bits of capacity and can therefore memorize the $14$ Boolean functions of two variables that can have a truth table of $8$ or less states (removing redundant states). The shortcut network b) has $3+4=7$ bits of capacity and can therefore implement all $16$ Boolean functions. The 3-layer network in c) has $6+\min(3,2)=8$ bits of capacity. Last but not least the deep network d) has $6+\min(6,2)+\min(3,2)=10$ bits of capacity.} 
\end{figure}

\section{Related Work}
\label{sec:prior}
\begin{figure*}
\centering
\fbox{\includegraphics[width=0.8\textwidth]{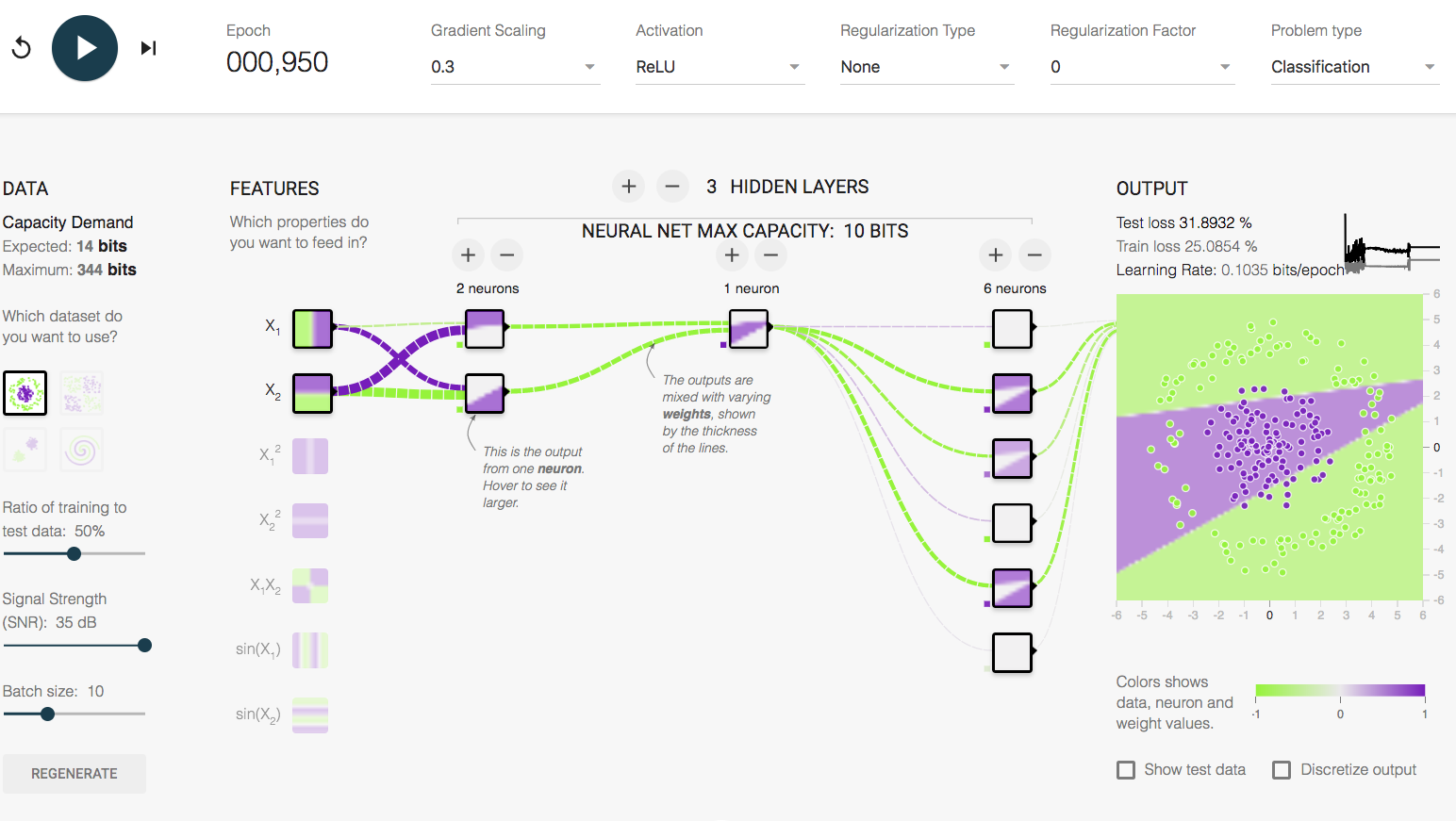}}
\caption{\label{f:dpitfmeter} Our web demo based on the Tensorflow Playground (link see Section~\protect\ref{sec:conclusion}) showing the Principle~\protect\ref{def:prin4} in action. The third hidden layer is dependent on the second hidden layer. Therefore it only holds $1$ bit of information (smoothed by the activation function) despite consisting of 6 neurons with a stand-alone capacity of $18$ bits.}
\end{figure*}

The perceptron was introduced in 1958~\cite{Rosenblatt1958} and since then, it has been extended in many variants, including, but not limited to, the structures described in~\cite{Crammer2006,Dekel2008,KrellPhd2015,KrellOc2015}. The perceptron uses a $k$-dimensional input and generates the output by applying a linear function to the input, followed by a gating function. The gating function is typically the identity function, the sign function, a sigmoid function, or the rectified linear unit (ReLU)~\cite{He2015,Nair2010}. Motivated by brain research~\cite{Feldman1982}, perceptrons are stacked together to networks and they are usually trained by a chain rule known as backpropagation~\cite{Rumelhart1986,Rumelhart1988}. 

Even though perceptrons have been utilized for a long time, its capacities have been rarely explored beyond discussion of linear separability. Moreover, catastrophic forgetting has so far not been explained satisfactorily. Catastrophic forgetting~\cite{McCloskey1989,Ratcliff1990} is a phenomenon consisting in the very quick loss of the network's capability to classify the first set of labels, when the net is first trained on one set of labels and then on another set of labels. Our interpretation of the cause of this phenomenon is that it is simply a capacity overflow.

One of the largest contributions to machine learning theory comes from Vapnik and Chervonenkis~\cite{Vapnik2000}, including the Vapnik-Chervonenkis (VC) dimension. The VC dimension has been well known for decades~\cite{Vapnik1971} and is defined as the largest natural number of samples in a dataset that can be shattered by a hypothesis space. This means that for a hypothesis space having VC dimension $D_{VC}$, there exists a dataset with $D_{VC}$ samples such that for any binary labeling ($2^{D_{VC}}$ possibilities) there exists a perfect classifier $f$ in the hypothesis space, that is, $f$ maps the samples perfectly to the labels. Due to perfect memorizing, it holds that $D_{VC}=\infty$ for 1-nearest neighbor. Tight bounds have so far been computed for linear classifiers ($k+1$) as well as decision trees~\cite{Asian2009}. The definition of VC dimension comes with two major drawbacks, however. First, it only considers the potential hypothesis space but not other aspects, like the optimization algorithm, or loss and regularization function affecting the choice of the hypothesis~\cite{Arpit2017}. Second, it is sufficient to provide only one example of a dataset to match the VC dimension. Hence, given a more complex structure of the hypothesis space, the chosen data can take advantage of this structure. As a result, shatterability can be increased by increasing the structure of the data. While these aspects do not matter much for simple algorithms, they constitute a major point of concern for deep neural networks. In~\cite{Vapnik1994}, Vapnik et al. suggest to determine the VC dimension empirically, but state in their conclusion that the described approach does not apply to neural networks as they are ``beyond theory". So far, the VC dimension has only been approximated for neural networks. For example, Mostafa argued loosely that the capacity must be bounded by $N^2$ with $N$ being the number of perceptrons~\cite{Mostafa1989}. Recently,~\cite{Shwartz2014} determined in their book that for a sigmoid activation function and a limited amount of bits for the weights, the loose upper bound of the VC dimension is $\mathcal{O}(|E|)$ where $E$ is the set of edges and consequently $|E|$ the number of non-zero weights. Extensions of the boundaries have been derived for example for recurrent neural networks~\cite{Koiran1998} and networks with piecewise polynomials~\cite{bartlett1999almost} and piecewise linear~\cite{Harvey2017} gating functions. Another article~\cite{Koiran1997} describes a quadratic VC dimension for a very special case. The authors use a regular grid of $n$ times $n$ points in the two dimensional space and tailor their multilayer perceptron directly to this structure to use only $3n$ gates and $8n$ weights.

One measure that handles the properties of given data is the Rademacher complexity~\cite{Bartlett2001}. For understanding the properties of large neural networks, Zhang et al.~\cite{zhang2017} recently performed randomization tests. They show that their observed networks can memorize the data as well as the noise. This is proven by evaluating that their neural networks perfectly learn with random labels or with random data. This shows that the VC dimension of the analyzed networks is above the size of the used dataset. But it is not clear what the full capacity of the networks is. This observation also explains why smaller size networks can outperform larger networks. A more elaborate extension of this evaluation has been provided by Arpit et al.~\cite{Arpit2017}. 

\begin{figure*}
\centering
\includegraphics[width=0.49\textwidth]{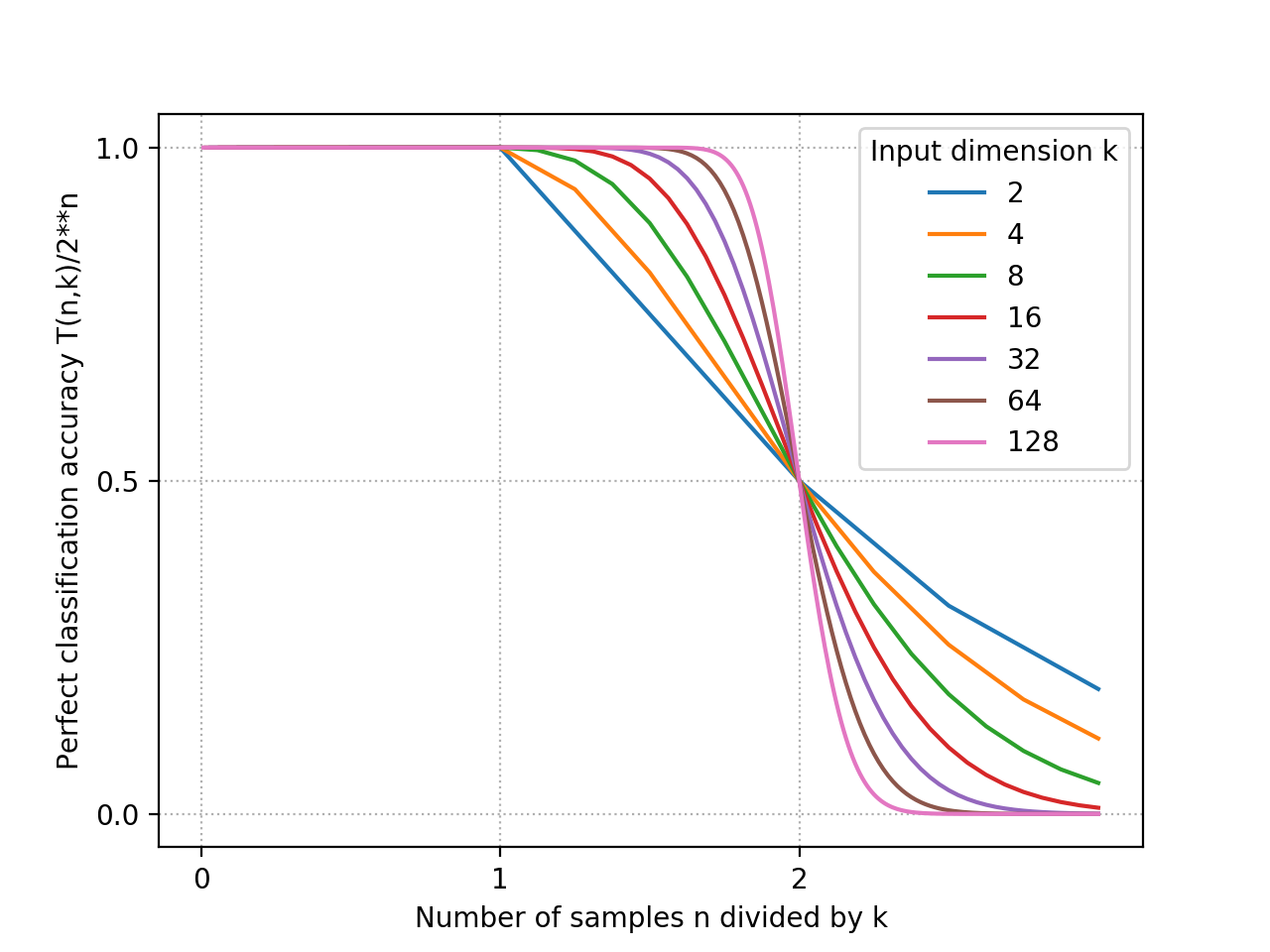}
\includegraphics[width=0.49\textwidth]{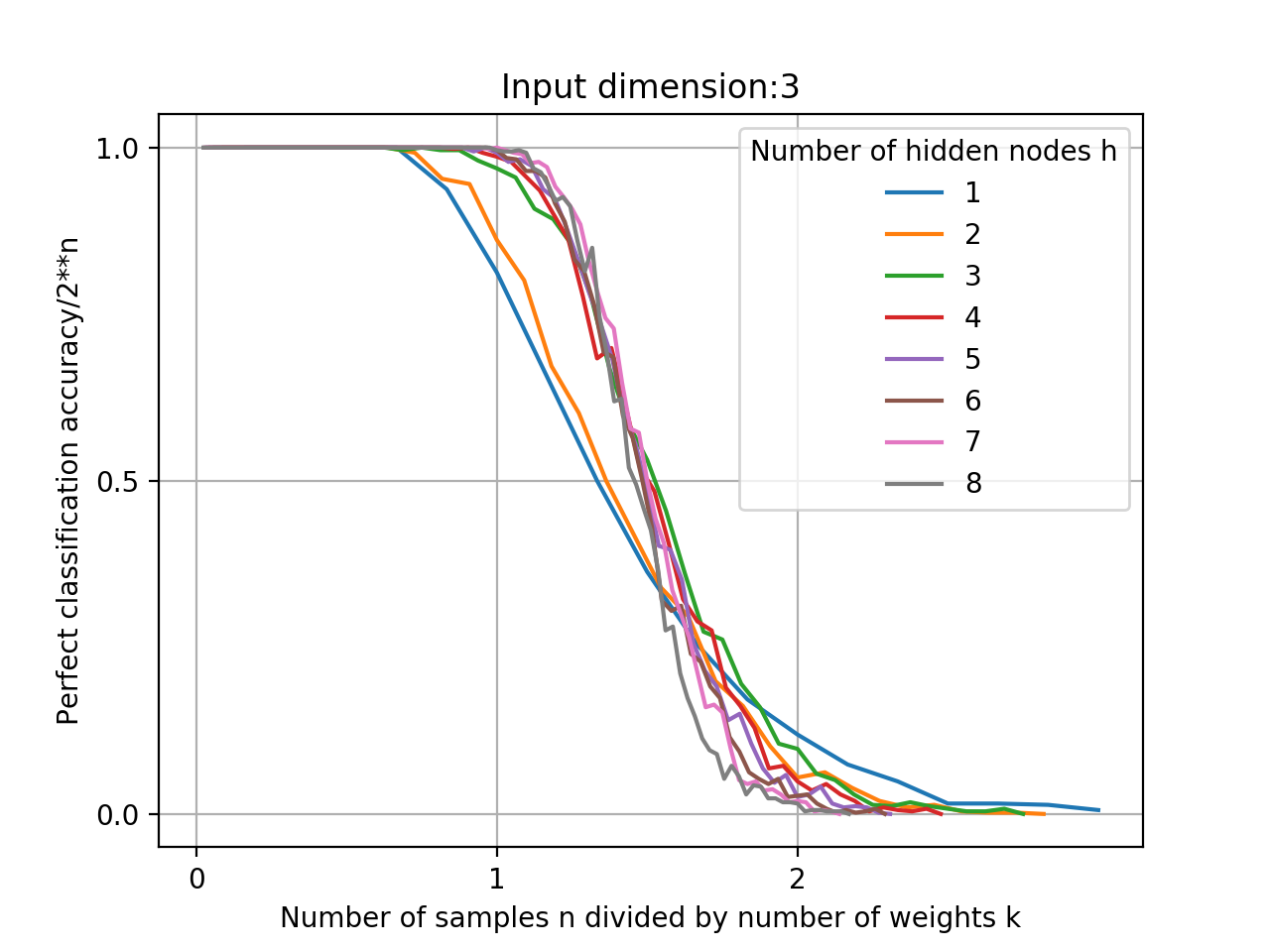}
\caption{
Left: \label{f:tnk} Characteristic curve examples of the $T(n,k)$ function for different input dimensions $k$ and the two crucial points at $n=k$ for the VC dimension and $n=2k$ for the MacKay capacity. Right: \label{f:realtnk} Measured characteristic curve example for different number of hidden layers for a configuration of scikit-learn\protect\cite{friedland2018caplaw}. The tools to measure and compare the characteristic curves of concrete neural network implementations are available in our public repository (see Section~\protect\ref{sec:conclusion}). 
}
\end{figure*}

Summarizing the contribution by~\cite{cover1965}, MacKay is the first to interpret a perceptron as an encoder in a Shannon communication model (\cite{mackay2003}, Chapter~40). MacKay's use of the Shannon model allows the measurement of the memory capacity of the perceptron in bits. Furthermore, it allows for the discussion of a perceptron's capabilities, without taking into account the number of bits used to store the weights (64\,bit doubles, real-valued, etc.). He also points out that there are two distinct transition points in the error measurement. The first one is discontinuous and happens at the VC dimension. For a single perceptron with offset, that point is $D_{VC}=k+1$, when $k$ is the dimensionality of the data. Below this point the error should be $0$, given perfect training, because the perceptron is able to generate all possible shatterings of the hypothesis space. For clarification, we summarize this proof in Section~\ref{sec:mackay} and present initial work on an extension in~\cite{friedland2018caplaw}.

Another important contribution using information theory comes from Tishby~\cite{Tishby2015}. They use the information bottleneck principle to analyze deep learning. For each layer, the previous layers are treated as an encoder that compresses the data $X$ to some better representation $T$ which is then decoded to the labels $Y$ by the consecutive layers. By calculating the respective mutual information $I(X,T)$ and $I(T,Y)$ for each layer they analyze networks and their behavior during training or when changing the amount of training data. Our Principle~\ref{def:prin4} is a direct consequence of his work. 

This questions of generalization and network architecture have recently become a heated academic discussion again as deep learning surprisingly seems to outperform shallow learning. For deep learning, single perceptrons with a nonlinear and continuous gating function are concatenated in a layered fashion. Techniques like convolutional filters, drop out, early stopping, regularization, etc., are used to tune performance, leading to a variety of claims about the capabilities and limits of each of these algorithms (for example~\cite{zhang2017}). We are aware of recent questioning of the approach of discussing the memory capacity of neural networks~\cite{Arpit2017,zhang2017}. Occam's razor~\cite{blumer1987} dictates that one should follow the path of least assumptions, as perceptrons were initially conceived as a "generalizing memory", as detailed for example, in the early works of  Widrow~\cite{widrow1962}. This approach has also been suggested by~\cite{Mostafa1989} and, as mentioned earlier, later explained in depth by MacKay~\cite{mackay2003}. Also, the Ising model of ferromagnetism, which is a well-known model used to explain memory storage, has already been reported to have similarities to perceptrons~\cite{gardner1987,gardner1988} and also to the neurons in the retina~\cite{tkacik2006ising}. 

\section{Capacities of a Perceptron}
\label{sec:mackay}
Here we summarize the proof elaborated in~\cite{cover1965} and \cite{mackay2003}, Chapter~40. 

The functionality of a perceptron is typically explained by the XOR example (i.\,e., showing that a perceptron with $2$ input variables $k$, which can have $2^k=4$ states, can only model $14$ of the $2^{2^k}=16$ possible output functions). XOR and its negation cannot be linearly separated by a single threshold function of two variables and a bias. For an example of this explanation, see~\cite{Rojas1996}, section~3.2.2. Instead of computing binary functions of $k$ variables, MacKay effectively changes the computability question to a labeling question: given $n$ points in general position, how many of the $2^n$ possible labelings in $\{0,1\}^n$ can be trained into a perceptron. Just as done by~\cite{cover1965,Rojas1996}, MacKay uses the relationship between the input dimensionality of the data $k$ and the number of inputs $n$ to the perceptron, which is denoted by a function $T(n,k)$ that indicates the number of ``distinct threshold functions'' (separating hyperplanes) of $n$ points in general position in $k$ dimensions. The original function was derived by~\cite{schlaefli1852}. It can be calculated as:
\begin{equation}
\label{eq:tnk2}
T(n,k)=2\sum_{l=0}^{k-1}\genfrac(){0pt}{0}{n-1}{l}
\end{equation}

Most importantly, it holds that
\begin{equation}
T(n,k)=2^n \text{ } \forall k: k\geq n.
\end{equation}
This allows to derive the VC dimension $D$ of a neuron with $k$ parameters shattering a set of $n$ points in general position. The number of possible binary labelings for $n$ points is $2^n$ and $T(n,n=k)=2^n$. 
This is the $D=k$, since all possible labelings of the $k=n$ points can be realized. 

When $k<n$, the $T(n,k)$ function follows a calculation scheme based on the Pascal Triangle~\cite{coolidge1949story}, which means that the loss due to incomplete shattering is still predictable. MacKay uses an error function based on the cumulative distribution of the standard Gaussian to perform that prediction and approximate the resulting distribution. More importantly, he defines a second point, at which only $50\,\%$ of all possible labelings can be separated by the binary classifier. He proofs this point to be at $n=2k$ for large $k$ and illustrates that there is a sharp continuous drop in performance at this point. MacKay then follows Cover's conclusion that the information theoretic capacity of a perceptron is $2k$. We call this point MacKay dimension in~\cite{friedland2018caplaw}.

When comparing and visualizing the $T(n,k)$ function, it is only natural to normalize function values by the number of possible labelings $2^n$ and to normalize the argument by the number of inputs $k$ which is equal to the capacity of the perceptron. Figure~\ref{f:tnk} displays these normalized functions for different input dimensions $k$. The functions follow a clear pattern like the characteristic curves of circuit components in electrical engineering. 


\section{Information Theoretic Model}
\label{sec:infomodel}
\begin{figure*}
\centering
\includegraphics[width=0.7\textwidth]{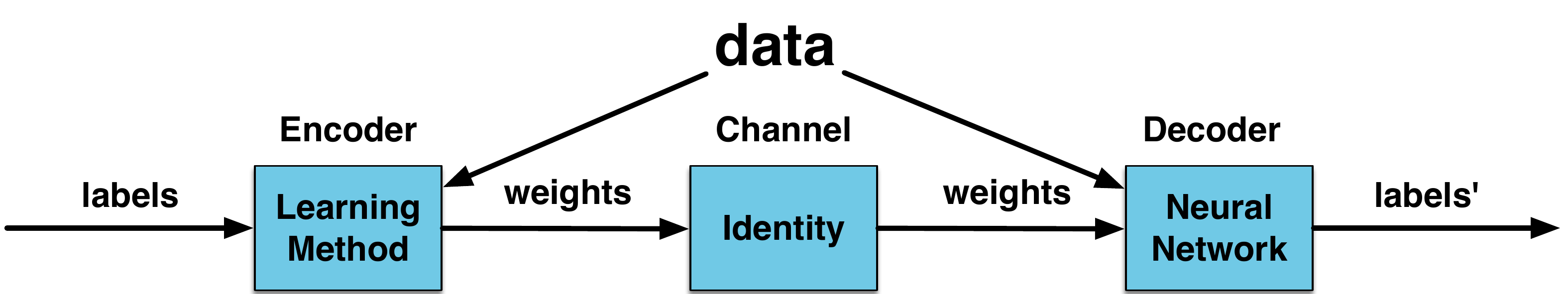}
\caption{\label{f:shannon}
Shannon's communication model applied to labeling in machine learning. A dataset consisting of $n$ sample points and the ground truth labeling of $n$ bits are sent to the neural network. The learning method converts it into a parameterization (i.\,e., network weights). In the decoding step, the network then uses the weights together with the dataset to try to reproduce the original labeling.}
\end{figure*}
To the best of our knowledge, MacKay is the first person to interpret a perceptron as an encoder in a Shannon communication model (\cite{mackay2003}, Chapter~40). In our article, we use a slightly modified version of the model depicted in Fig.~\ref{f:shannon}. 

As explained in Section~\ref{sec:mackay}, the input of the encoder are $n$ points in general position and a random labeling. The output of the encoder are the weights of a perceptron. The decoder receives the (perfectly learned) weights over a lossless channel. The question is: given the received set of weights and the knowledge of the data, can the decoder reconstruct the original labels of the points? In other words, the perceptron is interpreted as memory that stores a labeling of $n$ points relative to the data: how much information can then be stored by training a perceptron? We address this question by interpreting MacKay's definition of neuron capacity as a \textit{memory capacity}. 
The use of Shannon's model has an advantage: The mathematical framework of information theory can be applied to machine learning. Moreover, it allows to predict and measure neuron capacity in the unit of information: bits. 

We are interested in an upper bound. Therefore, we are only interested in the cases where we can guarantee lossless reproduction of the trained function; in other words, we are interested in the lossless memory capacity of neurons and networks of neurons. The definition of general position used in the previous section is typically used in linear algebra and is the most general case needed for a perceptron that uses a hyperplane for linear separation (see also Table~1 in~\cite{cover1965}). For neural networks, a stricter setting is required because they can implement arbitrary non-linear separations. We must therefore assume that the data points are in completely random positions. This is, the coordinates of the data points are equiprobable. 

\section{Networks of Perceptrons}
\label{sec:combining}

For the remainder of this article, we will assume that the network is a feedforward network consisting of traditional perceptrons (threshold units with activation function) with real-valued weights. Each unit has a bias, which counts as an additional real-valued weight~\cite{Rojas1996,mackay2003}. We will additionally assume that the perceptrons are part of a neural network embedded in the model depicted in Figure~\ref{f:shannon}, thus solving a binary labeling task. Because our discussion concerns the upper bounds, it is agnostic about training algorithms.

We define perceptrons to be \textit{in parallel} when they are connected to the same input. A \textit{layer} is a set of perceptrons in parallel. We define perceptrons to be \textit{in series} when they are connected in such a way that as the ones exclusively relying on the outputs of other perceptrons.

We note that Figure~\ref{f:examples} b) shows a perceptron that is connected in parallel. 

\begin{principle}{The output of a single perceptron yields maximally one bit of information}
\label{def:prin1}
\end{principle}

A perceptron uses a decision function $f(\vec{w},\vec{x},b)$ of the form\\
\begin{align}
f(\vec{w},\vec{x},b) = {\begin{cases}1&{\text{if }}\ \vec{w}\cdot \vec{x}>b\\0&{\text{otherwise}}\end{cases}} 
\end{align}
where $\vec{x} = \{ x_1, x_2, \ldots, x_N \}$ and $\vec{w} = \{ w_1, w_2, \ldots, w_N \}$ are real vectors and $b$ is a real scalar. Therefore, $\vec{w} \cdot \vec{x}$ represents a dot product:
\begin{align}
    \vec{w}\cdot \vec{x} = \sum _{i=1}^{N}w_{i}x_{i} \label{eqn:scalprod}
\end{align}
Because the inequality describes a binary condition (it is either greater or not), it follows that each perceptron ultimately behaves as a binary classifier, thus outputting a symbol $o=f(\vec{w},\vec{x},b) \in \{0,1\}$. If each state of $o$ is equiprobable, the information content encoded in the output of the perceptron is $\log_2(2)=1$ bit, else, if each state of $o$ is not equiprobable, the information content it is less than $1$ bit. It is worth remarking that an analytic approximation of the step function $f(\vec{w},\vec{x},b)$, for example a sigmoid, a rectified linear unit, or any other space dividing function, does not affect the aforementioned analysis. This is guaranteed by the data processing inequality~\cite{mackay2003} (p. 144).

\begin{principle}{The lossless storage capacity of a single perceptron is the number of parameters in bits.}
\label{def:prin2}
\end{principle}

This follows intuitively from Section~\ref{sec:mackay}, because $n$ bits of labels can be stored with $k=n$ parameters. This is, each parameter models one bit of labeling. However, confusion often arises over the fact that the weights are assumed real-valued. We therefore introduce the following lemma showing that a perceptron behaves analogous to a memory cell. This is, given fixed random input, it can model $2^k$ different output states, where $k$ is the number of parameters stored by the perceptron.  

Assume a perceptron as defined in Principle~\ref{def:prin1} in the model defined in Section~\ref{sec:infomodel}. Let $C(k)$ be the number of bits of labeling storable by $k$ parameters. 

\begin{lemma}[Lossless Storage Capacity of a Perceptron]
\label{lem:lmdperc}
$ C(k) = k$
\end{lemma}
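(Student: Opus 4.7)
The plan is to read $C(k)$ off directly from Schläfli's threshold-function count summarized in Section~\ref{sec:mackay}. Concretely, I would identify $C(k)$ with the largest $n$ such that every one of the $2^n$ binary labelings of $n$ generically placed inputs can be realized by some perceptron weight vector $\vec{w}\in\mathbb{R}^k$. For any such $n$, the encoder in Figure~\ref{f:shannon} can pick weights that reproduce the given labeling and the decoder, holding the data and receiving the weights over the lossless channel, recomputes the labels exactly. The lemma therefore reduces to showing that this largest $n$ equals $k$.

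For the upper bound $C(k)\leq k$, I would use
\[
T(n,k)=2\sum_{l=0}^{k-1}\binom{n-1}{l},
\]
which by Section~\ref{sec:mackay} counts the distinct threshold functions a perceptron with $k$ parameters realizes on $n$ points in general position. For $n>k$ the truncated binomial sum is strictly less than $2^{n-1}$, so $T(n,k)<2^n$ and at least one labeling is unrealizable. This contradicts the lossless storage demanded by the Shannon model of Section~\ref{sec:infomodel}, so no $n>k$ bits can be stored.

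For the lower bound $C(k)\geq k$, I would take exactly $n=k$ inputs. Under the ``completely random positions'' assumption of Section~\ref{sec:infomodel}, these inputs are in general position almost surely, and the boundary identity from Section~\ref{sec:mackay} gives $T(k,k)=2^k$. Thus every labeling in $\{0,1\}^k$ is realized by at least one weight vector, so the encoder can always output such a vector and the decoder recovers all $k$ label bits. Combined with the upper bound this yields $C(k)=k$.

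The main conceptual obstacle, which is precisely what the lemma is designed to dispel, is that $\vec{w}\in\mathbb{R}^k$ is real-valued and thus appears to carry unlimited information. The argument above sidesteps this by never asking the decoder to recover $\vec{w}$; it only asks for the $n$ label bits. Many different weight vectors induce the same labeling and collapse into a single ``codeword,'' so the relevant capacity is the number of distinct realizable input--output maps, which is exactly what $T(n,k)$ enumerates. Making this reduction from real-valued weights to equivalence classes of labelings explicit is the step I would take most care to justify, since it is the bridge between Principle~\ref{def:prin2} and the classical shattering argument.
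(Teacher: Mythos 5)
Your proof is correct, but it takes a genuinely different route from the paper's. The paper proves Lemma~\ref{lem:lmdperc} by an elementary sign-decomposition argument: each weight is written as $s_i|w_i|$, the magnitudes are dismissed as ``merely scaling factors,'' so each of the $N$ signs $s_i$ plus the bias sign $s_b$ contributes exactly one decision-relevant bit, giving $C(N+1)=N+1=k$. You instead route the entire argument through the Cover--Schl\"afli count $T(n,k)$: lossless storage of $n$ label bits is possible iff all $2^n$ labelings are realizable, $T(n,k)=2^n$ exactly when $n\leq k$, and $T(n,k)<2^n$ when $n>k$ because the truncated binomial sum omits positive terms. This is essentially the derivation the paper itself sketches in Section~\ref{sec:mackay} for the VC dimension, promoted to a proof of the capacity lemma; the paper explicitly chose \emph{not} to reuse it here, introducing the sign argument precisely to defuse the ``real-valued weights carry infinite information'' worry. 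Your closing paragraph addresses that same worry differently and, in my view, more cleanly --- by observing that the decoder only needs the induced labeling, so weight vectors collapse into equivalence classes counted by $T(n,k)$. What each approach buys: yours rests on a classical, rigorously established counting theorem and makes the reduction from $\mathbb{R}^k$ to finitely many realizable labelings explicit, but it inherits the ``general position'' hypothesis (which you correctly discharge via the equiprobable-position assumption of Section~\ref{sec:infomodel}); the paper's version is self-contained and parameter-by-parameter, which supports the later per-parameter capacity bookkeeping of Principles~\ref{def:prin3} and \ref{def:prin4}, but leans on the informal claim that magnitudes never change a decision. No gap in your argument; one could quibble that you should state explicitly that $k$ in $T(n,k)$ counts the bias among the parameters (the affine perceptron in $N$ dimensions being the homogeneous one in $N+1$), which is how the lemma's $k=N+1$ is meant.
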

\begin{proof}
Let us consider a case distinction over $b$.

Case 1: $b=0$\\
We now rewrite Eq.~\ref{eqn:scalprod} as:
\begin{align}
    \sum _{i=1}^{N}s_{i}|w_{i}|x_{i}
\end{align}
where $|w_{i}|$ is the absolute value of $w_i$ and $s_i$ is the sign of each $w_i$, this is $s_i \in \{-1,1\}$. 

It is now clear that, given an input $x_i$, the choice of $s_i$ in training is the only determining factor for the outcome of $f(\vec{w},\vec{x},b)$. The values of $|w_i|$ merely serve as scaling factors\footnote{Such scaling maybe important for generalization and training but is not relevant for computing the decision changing capabilities.}.      

Since $s_i \in \{-1,1\}$ and $|\{-1,1\}|=2$, it follows that each $s_i$ can be encoded using $\log_2(2)=1$ bit. This is, the maximum number of encodable outcome changes for $f(\vec{w},\vec{x},b)$ is $N$. This inevitably results in the memory capacity of a perceptron being $C(N)=N$.

Case 2: $b\neq 0$\\
Using the same approach as above, we begin by separating the bias and its sign: $b=s_b|b|$, where $|b|$ is the absolute value of $b$ and $s_b$ is the sign of $b$, this is $s_b \in \{-1,1\}$. We can now reformulate the equation as:
\begin{align}
    \sum _{i=1}^{N}s_{i}|w_{i}|x_{i} &> |b|s_b \\
    \frac{1}{s_b}\sum _{i=1}^{N}s_{i}|w_{i}|x_{i} &> |b|
\end{align}

Since $s_b$ is not dependent on $i$, $s_b$ can only be trained to correct all decisions at once. $|b|$ is strictly positive. This is, the inequality can be decided just by comparing the sign of $s_b$ and the sign of the sum. Again, $s_b \in \{-1,1\}$ and thus $s_b$ encodes $\log_2(2)=1$ bit.  As a result, $b$ contributes $1$ bit of memory capacity. In total, a perceptron with non-zero bias can therefore maximally memorize $N+1$ bits of changes to the outcomes of $f(\vec{w},\vec{x},b)$. Since $k=N+1$, it inevitably follows that $C(k)=k$.
\end{proof}

\begin{principle}{The total capacity $C_{tot}$ of $M$ perceptrons in parallel is:
\begin{align}
    C_{\text{tot}} = \sum_{i=1}^M C_i
\end{align}
where $C_i$ is the capacity of each neuron.
\label{def:prin3}}
\end{principle}

Consistent with MacKay's interpretation, connecting, for example, two perceptrons  in parallel is analogous to using two memory cells with capacity $C_1$ and $C_2$. The storage capacity of such a circuit is maximally $C_{tot}=C_1+C_2$ bits.

For the following lemma we assume two perceptrons connected to the same input, each with a number of parameters $k_1$ and $k_2$. Due to the associativity of addition, We can do this Without loss of generality.

\begin{lemma}[Perceptrons in parallel]
\label{the:lmpar}
$C(k_1+k_2)=k_1+k_2$
\end{lemma}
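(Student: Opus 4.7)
The plan is to reduce the statement to two applications of Lemma~\ref{lem:lmdperc} plus an independence argument on the parameter sets. Since the two perceptrons are connected to the same input vector $\vec{x}$ but have disjoint weight vectors $\vec{w}^{(1)}, b^{(1)}$ and $\vec{w}^{(2)}, b^{(2)}$ of sizes $k_1$ and $k_2$ respectively, their parameterizations live in disjoint coordinates of the overall parameter space. First I would apply Lemma~\ref{lem:lmdperc} to each perceptron in isolation to obtain that perceptron $j$ can realize $2^{k_j}$ distinct decision outcomes via independent choices of the sign pattern $s^{(j)}_1,\dots,s^{(j)}_{k_j}$ identified in the proof of that lemma.

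Next I would argue that the sign-pattern choices for perceptron~1 and perceptron~2 are drawn from independent binary alphabets, because no weight of one perceptron appears in the decision function of the other. Therefore the joint configuration space has cardinality $2^{k_1}\cdot 2^{k_2}=2^{k_1+k_2}$, and the number of bits of labeling the pair can losslessly store is $\log_2(2^{k_1+k_2})=k_1+k_2$, giving the lower bound $C(k_1+k_2)\geq k_1+k_2$.

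For the matching upper bound I would invoke Principle~\ref{def:prin1} together with the Shannon channel interpretation of Section~\ref{sec:infomodel}: each perceptron emits at most one bit per input, so the joint output of the two parallel units carries at most $2$ bits per input, and the memory side of the channel cannot carry more labeling information than what is allowed by the independently trainable parameters. Equivalently, by the data-processing inequality, any labeling the pair encodes factors through the parameter vector, whose effective decision-changing content is bounded by $k_1+k_2$ bits from the single-perceptron case. Combining the two bounds gives $C(k_1+k_2)=k_1+k_2$.

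The main obstacle is the independence step: one has to be careful that sharing the same input $\vec{x}$ does not introduce any coupling between the two sign patterns that reduces the joint state count below $2^{k_1+k_2}$. I would handle this by mimicking the case distinction on $b$ in the proof of Lemma~\ref{lem:lmdperc} separately for each perceptron, noting that the per-component scaling factors $|w^{(j)}_i|$ and $|b^{(j)}|$ are again irrelevant to the decision-changing count, so the two $\{-1,1\}^{k_j}$ sign alphabets are genuinely orthogonal. Extending this argument inductively to $M$ parallel perceptrons then yields Principle~\ref{def:prin3} as stated.
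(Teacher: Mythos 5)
Your proposal is correct and follows essentially the same route as the paper: reduce to Lemma~\ref{lem:lmdperc} applied to each perceptron separately, then argue that the two contributions add because the perceptrons can be trained independently. The only substantive difference is where the independence is grounded --- you appeal to the disjointness of the two parameter sets (the two sign alphabets), whereas the paper appeals to the assumption that the data points are in equiprobable random positions, which is the assumption that actually prevents the two perceptrons' realizable labelings from collapsing when they share the same input; your additional upper bound via Principle~\ref{def:prin1} and the data-processing inequality is extra detail the paper's one-line proof omits.
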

\begin{proof}
We know from Lemma~\ref{lem:lmdperc} that $C(k_1)=k_1$ and $C(k_2)=k_2$. Since we assume all points of the data to be in equiprobable positions, each perceptron $i$ can now maximally label $k_i$ points independently. This is, the two perceptrons can maximally label $k_1+k_2$ points. This is, $C(k_1+k_2)=k_1+k_2$.
\end{proof}

\begin{principle}{For perceptrons in series, the capacity of a subsequent layer cannot be larger than the largest possible amount of information output of the previous layer.}
\label{def:prin4}
\end{principle}

As explained in Section~\ref{sec:prior}, Tishby~\cite{Tishby2015} treats each layer in a deep perceptron network as an encoder for a subsequent layer. The work analyzes the mutual information between layers and points out that the data processing inequality holds between them, both theoretically and empirically. We are able to confirm this result and note that channel capacity $C$ in general is defined as $C=\sup _{{p_{X}(x)}}I(X;Y)$, where the supremum is taken over all possible choices of $p_{X}(x)$(~\cite{shannon1948bell}). The data processing inequality (\cite{mackay2003}, p. 144) states that if $X\rightarrow Y\rightarrow Z$ is a Markov chain then $I(x;y)\geqslant I(x;z)$, where $I(x;y)$ is the mutual information. In our model (See Section~\ref{sec:infomodel}), the channel is the identity channel and the label distribution is assumed as equiprobable. These two assumptions make the channel capacity identical to the memory capacity and to the mutual information. As a result, the capacity of a subsequent layer is upper bounded by the output of the previous layer. 

Without loss of generality, we assume two layers of perceptrons. The output of perceptron layer 1 is the sole input for perceptron layer 2. We denote the total capacity of layer 2 with $C_{L2}$, the number of parameters in layer 2 with $k_{L2}$ and the number of bits in the output of layer 1 with $o_{L1}$. 

\begin{lemma}[Perceptrons in Series]
\label{the:lmser}
$C_{L2} = \min(C(k_{L2}),o_{L1})$
\end{lemma}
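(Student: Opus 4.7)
The plan is to sandwich $C_{L2}$ between two matching bounds: show that it cannot exceed either $C(k_{L2})$ or $o_{L1}$, and then argue that whichever of the two is smaller is in fact attained, so the minimum is realized.

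For the first upper bound, I would view layer~2 on its own as a bank of perceptrons in parallel. By Lemma~\ref{lem:lmdperc} each neuron contributes capacity equal to its parameter count, and by Lemma~\ref{the:lmpar} (equivalently Principle~\ref{def:prin3}) these capacities sum over the parallel units to give layer~2 a standalone capacity of exactly $C(k_{L2})$. Since this bound holds for layer~2 regardless of what it is wired to, we already have $C_{L2}\le C(k_{L2})$.

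For the second upper bound, I would invoke the same data-processing-inequality argument given in the motivating paragraph of Principle~\ref{def:prin4}. Within the Shannon model of Section~\ref{sec:infomodel}, let $Y$ be the labeling, and $Z_1$, $Z_2$ the output vectors of layer~1 and layer~2. Because layer~2 sees only $Z_1$, we have a Markov chain $Y\to Z_1\to Z_2$, so $I(Y;Z_2)\le I(Y;Z_1)\le H(Z_1)\le o_{L1}$. The discussion of Principle~\ref{def:prin4} already established that under the identity-channel and equiprobable-label assumptions the memory capacity of a layer coincides with the corresponding mutual information, giving $C_{L2}\le o_{L1}$. Combining the two bounds yields $C_{L2}\le \min(C(k_{L2}),o_{L1})$.

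For achievability I would split into two cases. If $o_{L1}\ge C(k_{L2})$, then layer~1 can present layer~2 with at least $2^{C(k_{L2})}$ distinct input patterns, so Lemma~\ref{lem:lmdperc} applied inside layer~2 saturates the intrinsic bound $C(k_{L2})$. If $o_{L1}<C(k_{L2})$, then layer~2 has more than enough parameters to realize an arbitrary labeling of its at most $2^{o_{L1}}$ possible inputs, so the bottleneck $o_{L1}$ is fully used. The main obstacle is this first achievability case: one must justify that the outputs of layer~1, when regarded as inputs to layer~2, are in a configuration sufficiently close to the ``equiprobable position'' assumption of Section~\ref{sec:infomodel} for Lemma~\ref{lem:lmdperc} to apply verbatim to layer~2 with respect to these synthesized inputs. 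Without such a genericity argument, one could in principle feed layer~2 a degenerate input stream and fall short of $C(k_{L2})$, so the tightness of the minimum hinges precisely on this step.
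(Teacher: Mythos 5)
Your proof takes essentially the same route as the paper's: the Markov chain with the data-processing-inequality bound $C_{L2}\le o_{L1}$, combined with the intrinsic parallel-capacity bound $C_{L2}\le C(k_{L2})$, yielding the minimum. You are in fact more careful than the paper, whose proof only establishes these two upper bounds and then asserts equality; the achievability concern you flag --- whether layer~1's outputs are sufficiently generic for Lemma~\ref{lem:lmdperc} to apply verbatim inside layer~2 --- is a genuine gap that the paper's own argument leaves unaddressed.
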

\begin{proof}
Let create the Markov chain $X\rightarrow Y \rightarrow Z$, where $X$ is the random variable representing the input to layer 1, $Y$ is the random variable representing the output of layer 1 and $Z$ is representing the output of layer 2. It is clear that the $\sup_{{p_{X}(x)}}I(Y;Z)$ is bounded by $I(X;Y)$, which we know to be $o_{L1}$. If $o_{L1}>C(k_{L2})$, then $C(K_{L2})$ limits the number of bits that can be stored in layer 2. If $o_{L1} \leq C(k_{L2})$, then the data processing inequality does not allow for the creation of information and $\sup _{{p_{X}(x)}}I(Y;Z) \leq o_{L1}$. As a consequence, $C_{L2} = \min(C(k_{L2}),o_{L1})$
\end{proof}

When generalizing to more than two layers, it is important to keep in mind that any capacity constraint from an earlier layer will upper bound all subsequent layers. This is, capacity can never increase in subsequent layers. Note that the input layer counts as a layer as well. Figure~\ref{f:dpitfmeter} shows a screen shot of our neural network capacity web demo (link see Section~\ref{sec:conclusion}) with an example of Principle~\ref{def:prin4} in action. Figure~\ref{f:examples} discusses various architecture capacities practically applying the computation principles presented here. 

There is a notable illusion that sometimes makes it seem that Principle~\ref{def:prin4} does not hold. In training, weights are initialized, for example at random. This initial configuration can create the illusion that a layer has more states available than dictated by the principle. For example, a layer that has only $1$ bit of capacity using Principle~\ref{def:prin4} can be in more than $2$ states before the weights have been updated in training based on the information passed by a previous layer.

\subsection{Measuring Capacity}
It is possible to practically measure the capacity of concrete neural networks implementations with varying architectures and learning strategies. This is done by generating $n$ random data data points in $d$ dimensions and training the network to memorize all possible $2^n$ binary labeling vectors. Once a network is not able to learn all labelings anymore, we reached capacity. While this effectiveness measurement is exponential in run time, it only needs to be performed on a small representative subnet as capacity scales linearly.

We found that the effectiveness of neural network implementations actually varies dramatically (always below the theoretical upper limit).  Therefore capacity measurement alone allows for a task-independent comparison of neural network variations.  Our experiments show that linear scaling holds practically and our theoretical bounds are actionable upper bounds for engineering purposes. All the tested threshold-like activation functions, including sigmoid and ReLU exhibited the predicted behavior -- just as explained in theory by the data processing inequality. Our experimental methodology serves as a benchmarking tool for the evaluation of neural network implementations. Using points in random position, one can test any learning algorithm and network architecture against the theoretical limit both for performance and efficiency (convergence rate). Figure~\ref{f:tnk} (right) shows an example measurement curve. These results as well as all tools are available in our public repository (See Section~\ref{sec:conclusion}).

\section{Capacity Requirement Estimate}
\label{sec:heuristic}
\begin{algorithm}
\begin{algorithmic}
\Require $data$: array of length $i$ contains $d$-dimensional vectors $x$, $labels$: a column of $0$ or $1$ with length $i$ 
\Procedure{$MaxCapReq$}{$(data, labels)$}
\State $thresholds \gets 0$
\ForAll{$i$}
\State $table[i] \gets (\sum{x[i][d]}, label[i])$
\State $sortedtable \gets sort(table, key=column~0)$
\State $class \gets 0$
\EndFor
\ForAll{$i$}
\If{not $sortedtable[i][1] == class $}
\State $class \gets sortedtable[i][1]$
\State $thresholds \gets thresholds+1$
\EndIf
\EndFor
\State $maxcapreq \gets thresholds*d+thresholds+1$
\State $expcapreq \gets \log_2(thresholds+1)*d$
\State {\bf print} "Max:~"$+maxcapreq+$"~bits"
\State {\bf print} "Exp:~"$+expcapreq+$"~bits"
\EndProcedure
\caption{\label{alg:maxnn}Calculating the maximum and approximated expected capacity requirement of a binary classifier neural network for given training data.}
\end{algorithmic}
\end{algorithm}

The upper-bound estimation of the capacity allows the comparison of the efficiency of different architectures independent of a task. However, sizing a network properly to a task requires an estimate of the required capacity. We propose a heuristic method to estimate the neural network capacity requirement for a given dataset and labeling. 

The exact memorization capacity requirement based on our model in Figure~\ref{f:shannon} is the minimum description length of the data/labels table that needs to be memorized. In practice, this value is almost never given. Furthermore, in a neural network, the table is recoded using weighted dot-product threshold functions, which, as discussed in Section~\ref{sec:mackay}, has intrinsic compression capabilities. This is, often the labels of $n$ points can be stored with less than $n$ parameters. As we have done throughout the article, we will ignore the compression capabilities of neurons and work with the worst case.

\subsection{Upper Limit Network Size}
This section presents our proposed heuristic for a worst case sized network. Our idea for the heuristic method stems from the definition of the perceptron threshold function (see Principle~\ref{def:prin1}). We observe that the dot product has $d+1$ variables that need to be tuned, with $d$ being the dimensionality of the input vector $x$. This makes perceptron learning and backpropagation NP-complete~\cite{blum1989training}. However, for an upper limit estimation, we chose to ignore the training of the weights $w_i$ by fixing them to $1$: we only train the biases. This is done by calculating the dot products with $w_i:=1$, essentially summing up the data rows of the table. The result is a two-column table with these sums and the classes. We now sort this two-column table by the sums before we iterate through it and record the need of a threshold every time a class change occurs. Note that we can safely ignore column sums with the same value (collisions): If they don't belong to the same class, they count as a threshold. If an actual network was built, training of the weights would potentially resolve this collision. As a last step, we take the number of thresholds needed and estimate the capacity requirement for a network by assuming that each threshold is implemented by a neuron in the hidden layer firing $0$ or a $1$. The number of inputs for these neurons is given by the dimensionality of the data. We then need to connect a neuron in the output layer that has the number of hidden layer neurons as input. The threshold of that output neuron is $0$ and the input weights are $+1$ for class $1$ and $-1$ for class $0$. The reader is encouraged to check that such a network is able to label any table (ignoring collisions). Our algorithm is bounded by the runtime of the sorting, which is $\mathcal{O}(n \log(n))$ in the best case. Since we are able to effectively create a network that memorizes the labeling given the data without tuning the weights, we consider this the upper limit network. Any network that uses more parameters would therefore be wasting resources. Figure~\ref{alg:maxnn} shows pseudo code for this algorithm and the expected capacity presented in the next section. 

\begin{table*}
\centering
\resizebox{\textwidth}{!}{%

\begin{tabular}{|l|l|l|l|}
\hline
\rowcolor[HTML]{C0C0C0} 
\textbf{Dataset} & \textbf{Max Capacity Requirement} & \textbf{Expected Capacity Requirement} & \textbf{Validation (\% accuracy)} \\ \hline
AND, 2 variables & 4 bits & 2 bits & 2 bits (100\%) \\ \hline
XOR, 2 variables & 8 bits & 4 bits & 7 bits (100\%) \\ \hline
\begin{tabular}[c]{@{}l@{}}Separated Gaussians (100 samples)\end{tabular} & 4 bits & 2 bits & 3 bits (100\%) \\ \hline
2 Circles (100 samples) & 224 bits & 12 bits & 12 bits (100\%) \\ \hline
Checker pattern (100 samples) & 144 bits & 12 bits & 12 bits (100\%) \\ \hline
Spiral pattern (100 samples) & 324 bits & 14 bits & 24 bits (98\%) \\ \hline
ImageNet: 2000 images in 2 classes & 906984 bits & 10240 bits & 10253 bits (98.2 \%) \\ \hline
\end{tabular}
}
\caption{\label{t:experiments} Experimental validation of the heuristic capacity estimation method using the structures available both in our public repository and in the online demo.}
\end{table*}

\subsection{Approximately Expected Capacity}
We estimate the expected capacity by assuming that training the weights and biases is maximally effective. This is, it can cut down the number of threshold comparisons exponentially to $\log_2(n)$ where $n$ is the number of thresholds. The rationale for this choice is that a neural network effectively takes an input as a binary number and matches it against stored numbers in the network to determine the closest match. The output layer then determines the class for that match. That matching is effectively a search algorithm which in the best case can be implemented in logarithmic time. We call this the approximately expected capacity requirement as we need to take into account that real data is never random. Therefore, the network might be able to compress by a factor of $2$ or even a much higher margin. 

\subsection{Experimental Results}
Table~\ref{t:experiments} shows experimental results for various data sets. We show the maximum and the approximately expected capacity as generated by the heuristic method. We then show the achieved accuracy using an actual validation experiment using a neural network of the indicated capacity. The AND classifier requires one perceptron without bias. We implemented XOR using a shortcut network (see also~\cite{Rojas1996}). The Gaussians and the circle, checker, and spiral patterns are available as part of the Tensorflow Playground. For the ImageNet experiment, we took 2000 random images from 2 classes (``hummingbird" and ``snow leopard") and in lieu of a convolution layer, we compressed all images aggressively with JPEG quality 20~\cite{friedland2018helmholtz}. The image channels were combined from RGB into only the Y component (grayscale). We then trained a 3-layer neural network and increased the capacity successively. The best result was achieved at the capacity shown in the table; fewer parameters made the memorization result worse -- all other parameters being the same (e.g. 94.6\% accuracy at 5\,kbit capacity, 97.3\% accuracy at 9\,kbit capacity and 97.9\% accuracy at 11\,kbit capacity). We note that image experiments like these are often anecdotal as many factors play into the actual achieved accuracy, including initial conditions of the initialization, learning rate, regularization, and others. We therefore made the scripts and data available for repetition in our public repository (Link see Section~\ref{sec:conclusion}). The results show that our approximation of the expected capacity is very close to the actual capacity. 

\section{From Memorization to\\ Generalization}
\label{sec:generalization}
Training the network with random points makes the upper bound neural network size analytically accessible because no inference (generalization) is possible and the best possible thing any machine learner can do is to memorize. This methodology, which is not restricted to neural networks, therefore operates at the lower limit of generalization. 

In reality, especially with a large set of samples, one is very unlikely to encounter data with equiprobable distribution. A network trained based on the principles presented here is therefore overfitting. A first consequence is that using more capacity than required for memorization wastes memory and computation resources. Secondly, it will complicate any attempt at explaining the inferences made by the network.

To avoid overfitting and to have a better chance of explaining the data in a human comprehensible way, it is therefore advisable to reduce the number of parameters. This is, again, consistent with Occam's razor. For a given task, we therefore recommended to size the neural network for memorization at first and then successively re-train the network while reducing the number of parameters. It is expected that accuracy on the training data reduces with the network capacity reduction. Generalization capability, which should be quantified by measuring accuracy against a validation set, should increase, however. In the best case, the network loses the ability to memorize the lowest significant digits of the training data. The lowest significant digits are likely insignificant with regard to the target function. This is, they are noise. Cutting the lowest-significant digits first, we expect the decay of training accuracy to follow a logarithmic curve (this was also observed in~\cite{friedland2018helmholtz}). Ultimately, the network with the smallest capacity that is still able to represent the data is the one that maximizes generalization and the chances at explainability. The best possible scenario is a single neuron that can represent an infinite amount of points (above and below the threshold).

\section{Conclusion and Future Work}
\label{sec:conclusion}
We present an alternative understanding of neural networks using information theory. The main trick, that is not specific to neural networks, is to train the network with random points. This way, no inference (generalization) is possible and the best thing any machine learner can do is to memorize. We then present engineering principles to quantify the capabilities of a neural network given it's size and architecture as memory capacity. This allows the comparison of the efficiency of different architectures independently of a task. Second, we introduce and experimentally validate a heuristic method to estimate the neural network capacity requirement for a given dataset and labeling. We then relate this result to generalization and outline a process for reducing parameters. The overall result is a method to better predict and measure the capabilities of neural networks.

Future work in continuation of this research will explore non-binary classification, recursive architectures, and self-looping layers. Moreover, further research into investigating convolutional networks, fuzzy networks and RBF kernel networks would help put these types of architectures into a comparative perspective. We will also revisit neural network training given the knowledge that we have gained doing this research. During the backpropagation step, the data processing inequality is reversed. This is, only one bit of information is actually transmitted backwards through the layers.  All results and the tools for measuring capacity and estimating the required capacity are available in our public repository: \url{https://github.com/fractor/nntailoring}.
An interactive demo showing how capacity can be used is available at: \url{http://tfmeter.icsi.berkeley.edu}.  

\section*{Acknowledgements}
This work was performed under the auspices of the U.S. Department of Energy by Lawrence Livermore National Laboratory under Contract DE-AC52-07NA27344. It was also partially supported by a Lawrence Livermore Laboratory Directed Research \& Development grants (17-ERD-096 and 18-ERD-021). IM release number LLNL-TR-758456. Mario Michael Krell was supported by the Federal Ministry of Education and Research (BMBF, grant no. 01IM14006A) and by a fellowship within the FITweltweit program of the German Academic Exchange Service (DAAD). This research was partially supported by the U.S. National Science Foundation (NSF) grant CNS 1514509. The views and conclusions contained in this document are those of the authors and should not be interpreted as representing the official policies, either expressed or implied, of any sponsoring institution, the U.S. government or any other entity. We want to cordially thank Ra\'ul Rojas for in depth discussion on the chaining of the $T()$ function. We also want to thank Jerome Feldman for discussions on the cognitive backgrounds, especially the concept of actionability. Kannan Ramchandran's intuition of signal processing and information theory was invaluable.  We'd like thank Sascha Hornauer for his advise on the imagenet experiments as well as Alexander Fabisch, Jan Hendrik Metzen, Bhiksha Raj, Naftali Tishby, Jaeyoung Choi, Friedrich Sommer, Alyosha Efros, Andrew Feit, and Barry Chen for their insightful advise. Special thanks go to Viviana Rever\'on for proofreading.

\bibliographystyle{abbrv}
\bibliography{main}

\end{document}